\pgfplotsset{compat=1.17}
\tikzstyle{every node}=[draw, fill=white, shape=circle, inner sep = 1.5pt]
\begin{document}
\title{Maximal Ordinal Two-Factorizations}

\author{Dominik Dürrschnabel\inst{1,2}\orcidID{0000-0002-0855-4185} \and
    Gerd Stumme\inst{1,2}\orcidID{0000-0002-0570-7908}}
\authorrunning{D. Dürrschnabel and G. Stumme}

\institute{Knowledge \& Data Engineering Group,
    University of Kassel, Kassel,  Germany\\
    \and
    Interdisciplinary Research Center for Information System Design, Kassel, Germany\\
    \email{\{duerrschnabel, stumme\}@cs.uni-kassel.de} }

\maketitle              %
\begin{abstract}
    Given a formal context, an ordinal factor is a subset of its incidence relation that forms a chain in the concept lattice, i.e., a part of the dataset that corresponds to a linear order.
    To visualize the data in a formal context, Ganter and Glodeanu proposed a biplot based on two ordinal factors.
    For the biplot to be useful, it is important that these factors comprise as much data points as possible, i.e., that they cover a large part of the incidence relation.
    In this work, we investigate such ordinal two-factorizations.
    First, we investigate for formal contexts that omit ordinal two-factorizations the disjointness of the two factors.
    Then, we show that deciding on the existence of two-factorizations of a given size is an $\NP$-complete problem which makes computing maximal factorizations computationally expensive.
    Finally, we provide the algorithm \textsc{Ord2Factor} that allows us to compute large ordinal two-factorizations.
    \keywords{Formal concept analysis \and Ordinal factor analysis \and Ordinal two-factorization \and Disjoint factorizations.}
\end{abstract}

\section{Introduction}

A common way to analyze datasets with binary attributes is to treat them as numerical, i.e., the value 1 is assigned to each attribute-object incidence and 0 to each missing one.
Then, dimension reduction methods such as principal component analysis can be applied.
Thereby, multiple attributes are merged into a few number of axes while being weighted differently.
The emerging axes thus condense the presence or absence of correlated features.
The objects are embedded into these axes as follows.
Each object is assigned to a real-valued number in each axis to represent the best position of the object in the respective axis.
Thereby, the resulting placement of an object that only has some of these attributes yields an ambiguous representation.
Therefore, the main issues of this method arise.
Assigning a real value to an object is not consistent with the level of measurement of the underlying binary data.
It promotes the perception that an element has, compared to others, a stronger bond to some attributes, which is not possible in a formal context.
A method that encourages such comparisons and results in such an inaccurate representation of the original information is, in our opinion, not valid.
An example of such a principal component analysis projection is given in \cref{fig:pca}.
\begin{figure*}
    \centering
    \scalebox{0.8}{
        \includegraphics[height=20em]{tikz/Forum-Romanum-PCA.tikz}}
    \caption{A 2-dimensional projection of the objects from the dataset depicted in \cref{fig:dataset} using principal component analysis. This figure encourages the perception, that "Temple of Castor and Pollux" has a stronger bond to "GB1" than "Temple of Antonius and Fausta", which is false.}
    \label{fig:pca}
\end{figure*}

To address this problem, Ganter and Glodeanu~\cite{Ganter.2012} developed the method of \emph{ordinal factor analysis}.
It allows for a similar visualization technique while avoiding the problems that stem from the real-valued measurement on the binary attributes.
Once again, multiple attributes are merged into a single factor.
The computed projection in ordinal factor analysis thereby consists of linear orders of attributes, the so-called \emph{ordinal factors}.
Then, the method assigns each object, based on its attributes, a position in every factor.
Compared to the principal component analysis approach, the positions assigned in the process are natural numbers instead of real-valued ones.
Therefore, if interpreted correctly, the resulting projection does not express inaccurate and incorrect information.
A desirable property of ordinal factorizations is completeness which allows the deduction of all original information.
The positions of the objects are determined as follows.
Each object in the two-dimensional coordinate system is placed at the last position of each axis such that it has all attributes until this position.
Such a plot can be seen in \cref{fig:ordinal}.
Reading the biplot is done as follows:
Consider the "Portico of Twelve Gods" object. In the vertical factor, it is at position "GB1" which is preceeded by the attributes "M1", and "P" in this factor.
Thus, the object has all three of these attributes.
In the horizontal factor, it is at position "M1" which has no preceeding attributes.
These incidences together precisely represents the incidences of the object.
Deducing the same information from \cref{fig:pca} is hardly possible.

However, Ganter and Glodeanu do not provide a method for the computation of ordinal two-factorizations.
This is the point, where we step in with this work.
We provide an algorithm to compute ordinal two-factorizations if they exist.
We couple this algorithm with a method to compute a subset of the incidence relation of large size such that it admits an ordinal two-factorization.
This enables the computation of ordinal two-factorizations of arbitrary datasets.
This process combined results in the algorithm \textsc{Ord2Factor}.
Furthermore, we investigate ordinal two-factorizations with respect to their disjointness.

\begin{figure}[ht]
    \centering
    \scalebox{0.95}{
        \includegraphics[height=20em]{tikz/living-beings-and-water.tikz}}
    \caption{A biplot of a maximal ordinal two-factorization of the data from \cref{fig:dataset}. It represents all incidences from the formal context except (Temple of Romulus, GB1) and  (Basilica of Maxentius, B).}
    \label{fig:ordinal}
\end{figure}

\section{Related Work}

In this work, we consider a method that represent the objects in a low number of dimensions which condense multiple merged attributes.
A commonly applied approach with the same fundamental idea is principal component analysis~\cite{Pearson.1901}, which is a method that minimizes the average squared distances from the data points to a line.
An example for a principal component analysis projection is depicted in \cref{fig:dataset}.
For an extensive survey on dimensional reduction methods, we refer the reader to Espadoto et al.~\cite{Espadoto.2021}.
A comparison of principal component analysis with the methods from formal concept analysis was described by Spangenberg and Wolff~\cite{Spangenberg.1991}.

Our work is located in the research area formal concept analysis~\cite{fca-book} (FCA).
One well-researched way to apply dimensionality reduction in FCA is Boolean factor analysis ~\cite{Keprt.2004,Keprt.2006,Belohlavek.2007,Belohlavek.2010,Boeck.1988}.
Thereby, the incidence relation is represented by families of incident attribute and object subsets.
This idea is highly related to the notion of a formal concept in FCA.
Based on this method, Ganter and Glodeanu propose in \cite{Ganter.2012} to group multiple Boolean factors into many-valued factors.
One research direction that they consider to be of special interest is grouping them into ordinal factors, which is the direction we follow in this work.
The same authors also demonstrate~\cite{Glodeanu.2013} that the research method is useful in application.
The theory was also lifted to the triadic case~\cite{Glodeanu.2013_3} and graded data \cite{Glodeanu.2014}.
In our previous work~\cite{greedy_duerr}, we propose an algorithm to greedily compute ordinal factors in large formal contexts.
In a broader sense, the discovery of substructures, such as induced contranominal scales~\cite{contranominals} or ordinal motifs~\cite{Hirth.2023} is an often-considered problem in the context of structural investigations in formal concept analysis.

The methods developed in this paper can be considered dual to the \textsc{DimDraw}-algorithm~\cite{dimdraw}.
In both approaches, two linear orders are sought-after to represent the dataset which is done using a connection to a bipartite subgraph.

\begin{figure}[t]
    \centering
    \begin{cxt}
        \att{B}
        \att{GB1}
        \att{GB2}
        \att{M1}
        \att{M2}
        \att{M3}
        \att{P}
        \obj{XX.XX.X}{Arch of Septimus Severus}
        \obj{XXXXX..}{Arch of Titus}
        \obj{...X...}{Basilica Julia}
        \obj{X......}{Basilica of Maxentius}
        \obj{......X}{Curia}
        \obj{...X...}{House of the Vestals}
        \obj{.X.XX..}{Phocas column}
        \obj{.X.X..X}{Portico of Twelve Gods}
        \obj{XX.XXXX}{Temple of Antonius and Fausta}
        \obj{XXXXXXX}{Temple of Castor and Pollux}
        \obj{.X.....}{Temple of Romulus}
        \obj{...XX.X}{Temple of Saturn}
        \obj{...XX..}{Temple of Vespasian}
        \obj{.XXXX.X}{Temple of Vesta}
    \end{cxt}
    \caption{Running example: This dataset compares attributes of different social media platforms.}
    \label{fig:dataset}
\end{figure}

\section{Foundations of Ordinal Factor Analysis}
In this section we briefly recap the definitions and notions from graph theory, formal concept analysis and ordinal factor analysis that are necessary to understand this paper.
A graph is a tuple $(V,E)$ with $E \subset \binom{V}{2}$. The set $V$ is called the set of \emph{vertices} and $E$ the set of \emph{edges}. A path between two vertices $v_1$ and $v_n$ is a sequence of vertices $v_1\ldots v_n$ with $\{v_i, v_{i+1}\}\in E$. A (connected) component is a maximal subset of the vertices of a graph, such that between every pair of vertices in the component there is a path.
An \emph{ordered set} is a tuple $(B, \leq)$ where $\leq$ is a binary relation on $B \times B$ that is reflexive, antisymmetric and transitive.
The cocomparability graph of the ordered set $(B,\leq)$ is the graph $(B,E)$ where $\{a,b\} \in E$ if $a \nleq b$ and $b \nleq a$.
A \emph{formal context} is a triple $(G,M,I)$, where $G$ is called the set of \emph{objects}, $M$ is called the set of \emph{attributes} and $I\subset G \times M$ is called the incidence relation.
The two \emph{derivation operators} between the power sets of attributes and objects are given by $A'=\{m \in M \mid \forall g \in A : (g,m) \in I\}$ and $B'=\{g \in G \mid \forall m \in B: (g,m) \in I\}$.
A formal concept is a tuple $(A,B)$ with $A \subset G$ and $B \subset M$ such that $A' = B$ and $B' = A$.
The set of all formal concepts with is denoted by $\B$ and forms together with the order relation $\leq$ for which $(A_1,B_1) \leq (A_2, B_2)$ if and only if $A_1 \subset A_2$, the concept lattice $\BV=(\B, {\leq})$.

From here on we introduce the notions that are important for the present work.
A \emph{Ferrers relation} is a binary relation $F \subset G \times M$ with the property that if $(g,m) \in F$ and $(h,n) \in F$, either $(g,n) \in F$ or $(h,m) \in F$.
An \emph{ordinal two-factorization} of a formal context $(G,M,I)$ is a set of two Ferrers relations $F_1$ and $F_2$ such that $F_1 \cup F_2 = I$.
The \emph{incompatibility graph} of a formal context $(G,M,I)$ is the graph $(I,E)$ with $\{(g,m),(h,n)\} \in E$ if and only if $(g,n) \not \in I$ and $(h,m) \not \in I$.
We also say that $(g,n)$ and $(h,m)$ are \emph{incompatible}.
It is known, that a formal context admits an ordinal two-factorization, if and only if the incompatibility graph is bipartite~\cite[Sec.2,\,Prop.2]{Doignon.1984}.
Also note, that it is possible to extend an ordinal factor, i.e., a Ferrers relation, to a chain of formal concepts~\cite[Prop.7]{Ganter.2012}.

\section{Disjointness of Ordinal Two-Factorizations}

It is of interest wether the computed two ordinal factors are disjoint, i.e., if there is no information that is represented by both factors.
Otherwise, the ordinal factorization contains redundant information as the two Ferrers relations share pairs of the incidence relation.
However, as the formal context in \cref{cxt:factor_counterexample} shows, achieving two disjoint ordinal factors is not always possible.
The example is due to Das et al.~\cite{Das.1989}, where they characterize the formal contexts that are factorizable into two disjoint ordinal factors as interval digraphs.
The incompatibility graph of the example consists of two connected components, one consists of the single vertex $(6,f)$ and the other the vertices $I \setminus \{(6,f)\}$.
We can assign the bipartition classes of this second component to factor 1 and 2 without loss of generality as shown on the right side of the \cref{cxt:factor_counterexample}.
But then the incidence pair then $(6,f)$ has to be in both factors, in factor 1 because as $(6,e)\in I$ and $(2,f) \in I$ but $(2,e)\notin I$ and in factor 2 because of $(5,f)\in I$, $(6,b)\in I$, and $(5,b)\notin I$.

\begin{figure}[b]
    \centering
    \null\hfill%
    \begin{cxt}
        \att{a}%
        \att{b}%
        \att{c}%
        \att{d}%
        \att{e}%
        \att{f}%
        \att{g}%
        \obj{...XXXX}{1}%
        \obj{.....XX}{2}%
        \obj{......X}{3}%
        \obj{X......}{4}%
        \obj{X....X.}{5}%
        \obj{XX..XXX}{6}%
        \obj{XXX..X.}{7}%
    \end{cxt}%
    \hfill
    \begin{cxt}
        \att{a}%
        \att{b}%
        \att{c}%
        \att{d}%
        \att{e}%
        \att{f}%
        \att{g}%
        \obj{...1111}{1}
        \obj{.....11}{2}
        \obj{......1}{3}
        \obj{2......}{4}
        \obj{2....2.}{5}
        \obj{22..1X1}{6}
        \obj{222..2.}{7}
    \end{cxt}%
    \hfill\null%
    \caption{Example of a context with a maximal bipartite subgraph that does not give rise to an ordinal two-factorization. This example is due to Das et al.~\cite{Das.1989}.}
    \label{cxt:factor_counterexample}
\end{figure}

We know that a formal context admits a two-factorization if its incompatibility graph is bipartite, and we can deduce from its incompatibility graph incidence pairs that cannot appear in the same ordinal factor.
Still, we note that even in cases where disjoint two-factorizations do exist, the bipartition classes of the incompatibility graph do not necessarily give rise to an ordinal two-factorization.
To see this, refer to \cref{fig:ord_counterexample}.
The incompatibility graph (middle) of the formal context (left) consists of three components.
An assignment of the incompatibility graph to bipartition classes can be seen on the right, however the elements $(2,a)$ and $(3,b)$ of factor 2 would imply, that the element $(3,a)$ also has to be in factor 2.
However, this element is incompatible to element $(3,c)$, which is also in factor 2 by the assignment of the incompatibility graph.
Thus, such an assignment does not always result in a valid ordinal two-factorization.

\begin{figure}[t]
    \hspace*{-1.8em}
    \null\hfill
    \raisebox{3.2em}{
        \begin{cxt}
            \att{a}
            \att{b}
            \att{c}
            \obj{.xx}{1}
            \obj{x.x}{2}
            \obj{xx.}{3}
        \end{cxt}}
    \hfill
    \begin{tikzpicture}
        \node[shape=circle,draw=black] (A) at (0,0) {3,a};
        \node[shape=circle,draw=black] (B) at (0,.75) {2,a};
        \node[shape=circle,draw=black] (C) at (.75,0) {3,b};
        \node[shape=circle,draw=black] (D) at (.75,1.5) {1,b};
        \node[shape=circle,draw=black] (E) at (1.5,.75) {2,c};
        \node[shape=circle,draw=black] (F) at (1.5,1.5) {1,c} ;

        \path [-](A) edge  (F);
        \path [-](B) edge  (D);
        \path [-](C) edge  (E);
    \end{tikzpicture}
    \hfill
    \raisebox{3.2em}{\begin{cxt}
            \att{a}
            \att{b}
            \att{c}
            \obj{.12}{1}
            \obj{2.1}{2}
            \obj{12.}{3}
        \end{cxt}}
    \hfill\null
    \caption{\emph{Left:} The formal context of a contranominal scale. \emph{Middle:} its   comparability graph. \emph{Right:} A bipartition of the transitive comparability graph that does not give rise to an ordinal two-factorization.}
    \label{fig:ord_counterexample}
\end{figure}

On the other hand, if the incompatibility graph is connected and bipartite, any assignment of the elements to bipartition classes of the incompatibility graph generates a valid ordinal two-fac\-to\-ri\-za\-tion, as the following shows.

\begin{proposition}
    Let $\K$ be a formal context with a connected and bipartite incompatibility graph.
    Then there are two unique disjoint factors $F_1$ and $F_2$ that factorize $\K$. The sets
    $F_1$ and $F_2$ correspond to the bipartition classes of the incompatibility graph.
\end{proposition}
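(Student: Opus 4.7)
The plan is to combine the cited characterization (existence of an ordinal two-factorization is equivalent to bipartiteness of the incompatibility graph) with the standard fact that a connected bipartite graph admits exactly two $2$-colorings, differing by a swap of classes. Existence of \emph{some} factorization is thus handed to us by the cited result; the real work is to show that any such factorization must be disjoint and must coincide, up to labeling, with the bipartition of the incompatibility graph.

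The central observation I would prove first is that a Ferrers relation $F \subset I$ contains no incompatible pair: if $(g,m),(h,n) \in F$ were incompatible, then $(g,n),(h,m) \notin I$, while the Ferrers property would force $(g,n) \in F$ or $(h,m) \in F$, and $F \subset I$ would yield a contradiction. Equivalently, every Ferrers subrelation of $I$ is an independent set of the incompatibility graph.

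I would then take an arbitrary two-factorization $(F_1,F_2)$ of $\K$ and inspect any edge $\{(g,m),(h,n)\}$ of the incompatibility graph. The observation rules out that both endpoints share a single Ferrers relation, so they are split between $F_1$ and $F_2$. Moreover, neither endpoint can lie in $F_1 \cap F_2$: such an endpoint would inhabit both Ferrers relations, and because the other endpoint lies in at least one of them, we would obtain two incompatible pairs inside a common Ferrers relation. Hence every element of $F_1 \cap F_2$ is an isolated vertex of the incompatibility graph. Connectedness (with the trivial case $|I| \le 1$ handled separately) forbids isolated vertices, so $F_1 \cap F_2 = \emptyset$. The resulting partition of $I$ into $F_1$ and $F_2$ is then a valid $2$-coloring, and connectedness makes it the unique such coloring up to swap, which identifies $\{F_1, F_2\}$ with the bipartition classes.

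The step I expect to be the main obstacle is ruling out elements of $F_1 \cap F_2$: it is the one place where the Ferrers property is used on both factors simultaneously and where connectedness is genuinely needed. Once this is settled, disjointness and uniqueness follow from purely graph-theoretic facts about connected bipartite graphs, and existence is already guaranteed by the cited result.
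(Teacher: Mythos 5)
Your proof is correct and follows essentially the same route as the paper's: existence comes from the cited bipartiteness criterion, the key observation is that a Ferrers subrelation of $I$ cannot contain an incompatible pair (so each factor is an independent set of the incompatibility graph), and uniqueness follows from the unique $2$-coloring of a connected bipartite graph. You additionally spell out the disjointness step --- any element of $F_1 \cap F_2$ would have to be an isolated vertex, which connectedness forbids --- which the paper's three-line proof leaves implicit.
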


\begin{proof}
    As the cocomparability graph is bipartite, $\K$ admits an ordinal two-fac\-tor\-iza\-tion.
    As the incompatibility graph is connected, it has unique bipartition classes.
    Finally, two elements in the same bipartition class cannot appear in the same ordinal factor.
\end{proof}

On the other hand, if the incompatibility graph is not connected, it is of interest to further investigate the incidence pairs that can appear in both ordinal factors.
We do so in the following.

\begin{lemma}
    Let $\K=(G,M,I)$ be a formal context with bipartite incompatibility graph $(I,E)$.
    Let $F_1,F_2$ be an ordinal two-factorization of $\K$.
    For all elements $(g,m)\in F_1 \cap F_2$ it holds that $\{(g,m)\}$ is a connected component in $(I,E)$.
\end{lemma}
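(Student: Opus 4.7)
The plan is to proceed by contradiction, using the defining property of Ferrers relations to rule out that any element of $F_1 \cap F_2$ can be adjacent to another incidence pair in the incompatibility graph.

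First, I would establish an auxiliary observation: no two incompatible incidence pairs can belong to the same Ferrers relation. Indeed, if $(g,m)$ and $(h,n)$ are incompatible, then by definition $(g,n) \notin I$ and $(h,m) \notin I$. If both $(g,m)$ and $(h,n)$ lay in some Ferrers relation $F \subseteq I$, the Ferrers property would force $(g,n) \in F \subseteq I$ or $(h,m) \in F \subseteq I$, contradicting incompatibility.

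Next, assume $(g,m) \in F_1 \cap F_2$ and suppose for contradiction that $\{(g,m)\}$ is \emph{not} a connected component of $(I,E)$. Then $(g,m)$ has a neighbor $(h,n) \in I$ in the incompatibility graph. Since $F_1 \cup F_2 = I$, we have $(h,n) \in F_1$ or $(h,n) \in F_2$. In the first case, $(g,m)$ and $(h,n)$ are two incompatible pairs both lying in the Ferrers relation $F_1$; in the second case, the same conclusion holds for $F_2$. Either possibility contradicts the auxiliary observation. Hence $(g,m)$ is isolated, so $\{(g,m)\}$ is a connected component.

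There is no serious obstacle here: the statement essentially follows from unfolding the definitions once the key observation about incompatibility inside a Ferrers relation is made. The only subtlety worth noting explicitly is that membership of $(g,m)$ in \emph{both} factors is exactly what makes the case distinction on the neighbor $(h,n)$ trivial --- whichever factor $(h,n)$ falls in, $(g,m)$ shares that factor, and the Ferrers property is immediately violated.
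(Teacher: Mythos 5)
Your proof is correct and follows essentially the same route as the paper: assume $(g,m)\in F_1\cap F_2$ has an incompatible neighbor $(h,n)$, observe that two incompatible pairs cannot share a Ferrers relation, and conclude that $(h,n)$ could lie in neither $F_1$ nor $F_2$, contradicting $F_1\cup F_2=I$. Your explicit auxiliary observation just makes the Ferrers-property step that the paper leaves implicit a little more transparent.
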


\begin{proof}
    Assume not, i.e., there is an element $(g,m) \in F_1 \cap F_2$ that is not its own component.
    Then, there has to be some element $(h,n) \in I$ that is incompatible to $(g,m)$, i.e., $(g,n) \notin I$ and $(h,m) \notin I$.
    But then $(h,n)$ can be in neither $F_1$ nor $F_2$ which contradicts the definition of a Ferrers relation.
\end{proof}

Thus, only the elements of the incompatibility graph that are not connected to another element can be in both ordinal factors.
In the following, we further characterize these isolated elements, as we show that for these elements it is always possible that they are in both factors, which then fully characterizes the intersection of the two ordinal factors.

\begin{lemma}
    Let $\K=(G,M,I)$ be a formal context with a two-factorization $F_1$, $F_2$.
    Let $C$ be the set of all elements of $I$ that are incompatible to no other element.
    Then $F_1\cup C$, $F_2\cup C$ is also an ordinal two-factorization.
\end{lemma}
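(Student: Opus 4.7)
The plan is to verify the two Ferrers conditions; the union condition $(F_1\cup C)\cup(F_2\cup C)=F_1\cup F_2\cup C=I$ is immediate from $F_1\cup F_2=I$ and $C\subseteq I$. By the symmetric roles of the factors, it suffices to prove that $F_1\cup C$ is Ferrers, and I would argue by contradiction. Suppose $(g,m),(h,n)\in F_1\cup C$ but $(g,n),(h,m)\notin F_1\cup C$. If both pairs lie in $F_1$, the Ferrers property of $F_1$ itself yields $(g,n)\in F_1$ or $(h,m)\in F_1$; otherwise at least one of $(g,m),(h,n)$ belongs to $C\setminus F_1$, and since $F_1\cup F_2=I$ this forces it into $F_2\setminus F_1$. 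Without loss of generality I take $(g,m)\in C\cap(F_2\setminus F_1)$ and first consider the sub-case $(h,n)\in F_1$.

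The argument combines the isolation of $(g,m)$, the Ferrers property of $F_1$, and the observation that two incompatible elements must lie in different factors (otherwise the Ferrers property of their common factor would fail). Isolation of $(g,m)$ applied to $(h,n)\in I$ gives $(g,n)\in I$ or $(h,m)\in I$; I assume the former, the other case being symmetric. Then $(g,n)\in F_2\setminus F_1$ is non-isolated, so there is an incompatibility partner $(p,q)\in I$ with $(g,q)\notin I$ and $(p,n)\notin I$, and the different-factors observation forces $(p,q)\in F_1$. Chaining deductions: isolation of $(g,m)$ on $(p,q)$ gives $(p,m)\in I$; Ferrers of $F_1$ on $(h,n),(p,q)$ gives $(h,q)\in F_1$; isolation of $(g,m)$ on $(h,q)$ gives $(h,m)\in I$. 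Since $(h,m)\notin F_1\cup C$, the same mechanism delivers an incompatibility partner $(p',q')\in F_1$ of $(h,m)$; Ferrers of $F_1$ on $(p',q'),(h,n)$ then yields $(p',n)\in F_1$, and Ferrers of $F_1$ on $(p,q),(p',n)$ yields $(p',q)\in F_1\subseteq I$. But isolation of $(g,m)$ applied to $(p',q)$ demands $(g,q)\in I$ or $(p',m)\in I$, both of which have been ruled out, so the assumption is refuted.

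The main obstacle is the remaining sub-case $(h,n)\in C\setminus F_1$, in which Ferrers of $F_1$ on pairs involving $(h,n)$ is no longer directly available. Here I would exploit that $(h,n)$ is itself isolated: applying its isolation to $(p,q)$ and $(p',q')$ produces $(h,q),(p',n)\in I$ directly, after which Ferrers of $F_1$ on the two $F_1$-elements $(p,q),(p',q')$ yields $(p',q)\in F_1$ or $(p,q')\in F_1$. The first option gives the same contradiction via $(g,m)$ as above; the second makes $(p,q')$ incompatible with $(h,n)$, because $(h,q')\notin I$ and $(p,n)\notin I$, and so contradicts $(h,n)\in C$. Beyond this case split and the careful bookkeeping of which element lies in which factor, no further subtleties are expected.
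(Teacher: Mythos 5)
Your proof is correct and follows essentially the same strategy as the paper's: assume a violating quadruple, observe that each cross-pair outside $F_1\cup C$ lies in $F_2\setminus C$ and hence has an incompatibility partner that is forced into $F_1$, chain Ferrers deductions inside $F_1$ together with the isolation of the $C$-elements, and finally exhibit an element of $F_1$ incompatible with an isolated element. Your case split (on whether $(h,n)\in F_1$ rather than on whether $(h,n)\in C$) and your choice of which cross-incidence to assume first differ only cosmetically from the paper's argument.
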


\begin{proof}
    Let $G_i = F_i \setminus C$ and $\tilde{F}_i= F_i\cup C$ for $i \in \{1,2\}$
    Assume the statement is not true, i.e., either $\tilde{F}_1$ or $\tilde{F}_2$ is no ordinal factor.
    Without loss of generality, let $\tilde{F}_1= F_1\cup C$ be no ordinal factor.
    Then, there are two elements $(g,m),(h,n) \in \tilde{F}_1$ such that $(g,n)\notin \tilde{F}_1$ and $(h,m)\notin \tilde{F}_1$.
    As $F_1$ is an ordinal factor, at least one of the two elements has to be in $C$, let without loss of generality $(g,m) \in C$.
    We now do a case distinction whether one or both of them are in $C$.\\
    \emph{Case 1.} Let first $(g,m) \in C$ and $(h,n) \notin C$.
    One of the elements $(g,n)$ or $(h,m)$ has to be in $I$, otherwise $(g,m)$ and $(h,n)$ are incompatible, without loss of generality, let $(h,m) \in I$.
    As $(h,m) \notin C$, it has to hold that $(h,m) \in G_2$ and there has to be some $(x,y) \in G_1$ with $(h,y) \notin I$ and $(x,m) \notin I$.
    As $(x,y) \in G_1$, $(h,n) \in G_1$, and $(h,y) \notin I$ and $F_1$ is an ordinal factor, $(x,n) \in F_1$.
    As $(g,m)\in C$ it is incompatible with no element and thus not incompatible with $(x,n)$ in particular, but $(x,m) \notin I$, it holds that $(g,n)\in I$.
    The element $(g,n)$ has to be in $G_2$, as otherwise $(g,m)$ and $(h,n)$ are not incompatible.
    Thus, there also has to be some element in $(a,b) \in G_1$ with $(a,n) \notin I$ and $(g,b) \notin I$.
    As $F_1$ is an ordinal factor and $(x,n) \in F_1, (a,b) \in F_1$ and $(a,n) \notin F_1$, the element $(x,b)\in F_1$.
    But then $(x,b)$ is incompatible to $(g,m)$ which is a contradiction to $(g,m)$ being in $C$.\\
    \emph{Case 2.}
    Let $(g,m) \in C$ and $(h,n) \in C$. Either $(g,n) \in I$ or $(h,m) \in I$, let without loss of generality $(g,n) \in I$.
    Then it has to be hold more specifically that in $(g,n) \in G_2$.
    Thus, there is some element $(x,y) \in G_1$ with $(h,y) \notin I$ and $(x,m) \notin I$.
    Because $(x,y)$ has to be compatible with $(h,n)$, it has to hold that $(x,n) \in I$.
    On the other hand, $(x,n)$ has to be compatible with $(g,m)$, thus $(g,n) \in I$.
    It then has to hold that $(g,n) \in G_2$ and thus some element $(a,b) \in G_1$ has to exist with $(g,b) \notin I$ and $(a,n) \notin I$.
    As $(x,y) \in F_1$ and $(a,b) \in F_1$ and $F_1$ is an ordinal factor, either $(a,y) \in F_1$ or $(x,b) \in F_1$.
    If $(a,y) \in F_1$, it would be incompatible to $(g,m)$, if $(x,b) \in F_1$ it would be incompatible to $(h,n)$. Both would be a contradiction to the respective element being in $C$.

\end{proof}

This theorem finishes a complete characterization of the non-disjoint part of ordinal factors.
Therefore, a partition of the incidence as follows always exists, if the context is ordinal two-factorizable.

\begin{theorem}
    Let $\K=(G,M,I)$ be a two-factorizable formal context with $(I,E)$ its incompatibility graph.
    Then there is a partition of $I$ into $F_1, F_2,C$ with $C = \{D \mid D $ connected component of $(I,E), |D| = 1\}$ and $F_1\cup C$ and $F_2 \cup C$ are Ferrers relations.
\end{theorem}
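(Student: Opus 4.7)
The plan is to assemble the theorem directly from the two preceding lemmas: the first restricts where the intersection of any two-factorization can live, and the second shows that enlarging both factors by the singleton components preserves the Ferrers property. Combined, these two facts essentially dictate the desired three-part partition.

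First, I would invoke two-factorizability of $\K$ to fix some ordinal two-factorization $F_1', F_2'$ of $\K$. I would let $C$ denote the set of incidence pairs that are incompatible with no other pair of $I$, i.e.\ precisely those forming singleton connected components in $(I,E)$. This is the set named in the theorem statement.

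Next, I would apply the first lemma of the section to $(F_1', F_2')$: every element of $F_1' \cap F_2'$ must be a singleton component of $(I,E)$, so $F_1' \cap F_2' \subseteq C$. I would then set
\[
F_1 := F_1' \setminus C, \qquad F_2 := F_2' \setminus C,
\]
and verify that $\{F_1, F_2, C\}$ is a partition of $I$. Their union equals $(F_1'\cup F_2') \cup C = I$ since $F_1' \cup F_2' = I$ and $C \subseteq I$; they are pairwise disjoint because $F_1$ and $F_2$ both exclude $C$ by construction, and $F_1 \cap F_2 = (F_1' \cap F_2') \setminus C = \emptyset$ by the inclusion just established.

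Finally, I would apply the second lemma of the section to $(F_1', F_2')$ to obtain that $F_1' \cup C$ and $F_2' \cup C$ are Ferrers relations. Since $F_1 \cup C = F_1' \cup C$ and $F_2 \cup C = F_2' \cup C$, the two sets $F_1 \cup C$ and $F_2 \cup C$ of the constructed partition are Ferrers relations as required. The main obstacle is merely bookkeeping: checking that pulling $C$ out of the two starting factors and then adding it back does not destroy disjointness or coverage; the substantive Ferrers-closure content is entirely carried by the second lemma.
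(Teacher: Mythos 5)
Your proposal is correct and follows essentially the same route as the paper: fix a two-factorization, set $C$ to the isolated vertices of the incompatibility graph, take $F_i$ to be the factors with $C$ removed, and invoke the two preceding lemmas for disjointness and the Ferrers property. In fact you are slightly more careful than the paper's own (very terse) proof, since you explicitly cite the first lemma to show $F_1'\cap F_2'\subseteq C$ and hence that the three parts are pairwise disjoint.
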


\begin{proof}
    This theorem directly follows from the previous lemma.
    Let $\tilde{F}_1, \tilde{F}_2$ be a two-factorization and $C = \{D \mid D $ connected component of $(I,E), |D| = 1\}$.
    Then the partition is given by $\tilde{F}_1 \setminus C$, $\tilde{F}_2 \setminus C$, and $C$.
\end{proof}

\section{An Algorithm for Ordinal Two-Factorizations}

Now, we propose an algorithm to compute ordinal two-factorizations if they exist.
As we saw in the last section, the bipartition classes of the incompatibility graph do not directly give rise to an ordinal two-factorizations.
An important observation~\cite[Thm.46]{fca-book} is that a formal context can be described by the intersection of two Ferrers relations, if and only if its corresponding concept lattice can be described as the intersection of two linear orders, i.e., if it has order dimension two.
As the complement of a Ferrers relation is once again a Ferrers relation, a formal context is two-factorizable if and only if the concept lattice of its complement context has order dimension two.
We leverage this relationship with the following theorem, to explicitly compute the ordinal two-factorization.

\begin{theorem}
    Let $\K = (G,M,I)$ be a formal context.
    Let $(B, \leq)=\BV(\compl{\K})$.
    If $\K$ is two-factorizable, then $\leq$ is two-dimensional and there is a conjugate order $\leq_c$.
    The sets
    \begin{align*}
        F_1=\{(g,m) \in G \times M \mid \nexists (A,B), (C,D) \in \B(\compl\K), g \in A, m \in D, & \\((A,B),(C,D)) \in ({\leq} \cup {\leq_c})&\}
    \end{align*}
    and
    \begin{align*}
        F_2=\{(g,m) \in G \times M \mid \nexists (A,B), (C,D) \in \B(\compl\K), g \in A, m \in D, & \\((A,B),(C,D)) \in ({\leq} \cup {\geq_c})&\}
    \end{align*}
    give rise to an ordinal factorization of $\K$.
\end{theorem}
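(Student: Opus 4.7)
The plan is to take the first assertion of the theorem — the existence of the conjugate order $\leq_c$ — as an immediate consequence of the dimension-two characterization cited just before the theorem (a context is two-factorizable iff its complement has order dimension two, which happens iff its concept lattice admits a conjugate order). What remains is to show that the explicitly constructed $F_1$ and $F_2$ are Ferrers relations whose union is $I$. My core idea is to reformulate membership in $F_k$ as a single comparison, in the $k$-th linear extension of $\leq$, between the object concept of $g$ and the attribute concept of $m$ in $\BV(\compl{\K})$.

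First I would observe that in $\BV(\compl{\K})$ the concepts $(A,B)$ with $g \in A$ form the principal filter above the object concept $\gamma(g) := (\{g\}'',\{g\}')$, and dually the concepts $(C,D)$ with $m \in D$ form the principal ideal below the attribute concept $\mu(m) := (\{m\}',\{m\}'')$, derivations taken in $\compl{\K}$. Writing $L_1 = {\leq} \cup {\leq_c}$ and $L_2 = {\leq} \cup {\geq_c}$ for the two linear extensions of $\leq$ furnished by the conjugate order, the fact that each $L_k$ extends $\leq$ gives $(A,B) \geq_{L_k} \gamma(g)$ whenever $g \in A$ and $(C,D) \leq_{L_k} \mu(m)$ whenever $m \in D$. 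Hence the existential in the definition of $F_k$ collapses: some such pair satisfies $(A,B) \leq_{L_k} (C,D)$ iff the extremal pair does, i.e., iff $\gamma(g) \leq_{L_k} \mu(m)$. So $(g,m) \in F_k$ iff $\gamma(g) >_{L_k} \mu(m)$.

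Second, $F_1 \cup F_2 = I$ follows from the defining property $L_1 \cap L_2 = {\leq}$ of a conjugate order: $\gamma(g) \leq \mu(m)$ in $\BV(\compl{\K})$ is equivalent to $(g,m) \notin I$ and also equivalent to the conjunction of $\gamma(g) \leq_{L_1} \mu(m)$ and $\gamma(g) \leq_{L_2} \mu(m)$. Negating, $(g,m) \in I$ iff at least one strict inequality $\gamma(g) >_{L_k} \mu(m)$ holds, which by the reformulation is exactly $(g,m) \in F_1 \cup F_2$. The Ferrers property of $F_1$ is then a two-case chain argument: given $(g,m), (h,n) \in F_1$ we have $\gamma(g) >_{L_1} \mu(m)$ and $\gamma(h) >_{L_1} \mu(n)$, and since $L_1$ is a linear order either $\mu(m) \geq_{L_1} \mu(n)$ — yielding $\gamma(g) >_{L_1} \mu(n)$ and hence $(g,n) \in F_1$ — or $\mu(n) >_{L_1} \mu(m)$ — yielding $\gamma(h) >_{L_1} \mu(m)$ and hence $(h,m) \in F_1$. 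The identical argument with $L_2$ handles $F_2$.

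The main obstacle, as I see it, is the first step: recognising that the double existential over all concept pairs containing $g$ in the extent and $m$ in the intent collapses to the single extremal pair $(\gamma(g), \mu(m))$. Once this reduction is established, both the covering $F_1 \cup F_2 = I$ and the Ferrers property of each $F_k$ follow cleanly from linearity of $L_1, L_2$ together with the conjugate-order identity $L_1 \cap L_2 = {\leq}$.
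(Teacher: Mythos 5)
Your argument is correct, and it reaches the conclusion by a genuinely different route than the paper. The paper works directly with the existential witnesses in the definition of $F_1$: assuming $(g,m),(h,n)\in F_1$ but $(g,n),(h,m)\notin F_1$, it extracts four witnessing concepts and derives the cycle $(A_1,B_1)\lessdot(A_2,B_2)\lessdot(A_3,B_3)\lessdot(A_4,B_4)\lessdot(A_1,B_1)$ in the linear order ${\lessdot}={\leq}\cup{\leq_c}$, a contradiction; for $F_1\cup F_2=I$ it uses the reflexive pair on the concept $(g'',g')$ for one inclusion and an argument over arbitrary concept pairs for the other. You instead first collapse the double existential to the extremal pair: since $g\in A$ iff $\gamma(g)\leq(A,B)$ and $m\in D$ iff $(C,D)\leq\mu(m)$ in $\BV(\compl\K)$, and each $L_k$ extends $\leq$, you get $(g,m)\in F_k$ iff $\mu(m)<_{L_k}\gamma(g)$. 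This reformulation, which the paper does not make, exhibits each $F_k$ explicitly in staircase form, after which the Ferrers property is an immediate two-case comparison in a linear order and the covering $F_1\cup F_2=I$ falls out of $L_1\cap L_2={\leq}$ together with the basic equivalence that $\gamma(g)\leq\mu(m)$ holds iff $(g,m)\in\compl{I}$. What your route buys is a cleaner and more robust proof of the inclusion $I\subseteq F_1\cup F_2$: the paper's step asserting $(A,B)\nleq_c(C,D)$ for \emph{arbitrary} concept pairs from the negated existential $(g,m)\notin F_1$ does not literally follow as stated, whereas your extremal-pair reduction makes exactly that step sound. The paper's version, on the other hand, stays entirely at the level of the defining formulas and does not need the object- and attribute-concept machinery.
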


\begin{proof}
    We have to show that $F_1$ and $F_2$ are Ferrers relations and $F_1 \cup F_2 = I$.
    We first show that $F_1$ is a Ferrers relation.
    By definition $\lessdot \coloneqq {\leq} \cup {\leq_c}$ is a chain ordering all formal concepts of $\B(\compl{K})$.
    Let $(g,m)$ and $(h,n)$ be two pairs in $F_1$.
    Assume that $(g,n) \notin F_1$ and $(h,m) \notin F_1$.
    Then there have to be two concept $(A_1,B_1)$ and $(A_2,B_2)$ with $g \in A_1$, $n \in B_2$ such that $(A_1,B_1) \lessdot (A_2,B_2)$.
    Similarly, there have to be two concepts $(A_3,B_3)$ and $(A_4,B_4)$ with $h \in A_3$, $m \in B_4$ such that $(A_3,B_3) \lessdot (A_4,B_4)$.
    For the concepts $(A_2,B_2)$ and $(A_3,B_3)$ it holds that $(A_2,B_2) \lessdot (A_3,B_3)$, as $(A_2,B_2) \neq (A_3,B_3)$ and  $(A_3,B_3) \centernot\lessdot (A_2,B_2)$ because $(g,m) \notin F_1$.
    By the same argument, $(A_4,B_4) \lessdot (A_1,B_1)$.
    Thus, $(A_1,B_1)\lessdot (A_2,B_2) \lessdot (A_3,B_3) \lessdot (A_4,B_4) \lessdot (A_1,B_1)$, which would imply that these three concepts are equal and is thus a contradiction.
    This proves that $F_1$ is a Ferrers relation.
    The argument to shows that $F_2$ is a Ferrers relation is dual.
    We now show that $F_1 \cup F_2 = I$.
    Let $(g,m) \in F_1 \cup F_2$, without loss of generality let it be an element of $F_1$.
    Then there are no two concepts $(A,B), (C,D) \in \B(\compl\K)$, with $g \in A$, $m \in D$ and
    $((A,B), (C,D)) \in ({\leq} \cup {\leq_c})$.
    Consider the concept $(g'',g')$ with the derivation from the complement context.
    By definition $((g'',g'), (g'',g')) \in ({\leq} \cup {\leq_c})$ and thus $m \notin g'$ when using the derivation from the complement context, i.e., $(g,m)\notin \compl{I}$.
    But then, $(g,m) \in I$.
    Now, let $(g,m) \in I$ and assume that $(g,m)\notin F_1$.
    Let $(A,B)$ and $(C,D)$ be arbitrary concepts of $\B(\compl\K)$ with $g \in A$ and $m \in D$.
    As $(g,m) \in I$, it is not in the incidence of $\compl\K$ and thus $(A,B)$ and $(C,D)$ are not comparable with $\leq$.
    As $(g,m) \notin F_1$ it holds $(A,B) \nleq_c (C,D)$ and as they are incomparable with $\leq$ it has to hold that $(A,B) \geq_c (C,D)$.
    Thus, $(g,m)\in F_2$.
    This shows that $F_1 \cup F_2 = I$ and thus concludes the proof.
\end{proof}

This proof gives rise to the routine in \cref{alg:factor2d} where this information is used to compute an ordinal two-factorization of the formal context.
It computes the two sets $F_1$ and $F_2$ from the previous theorem.
To do so, it has to be paired with an algorithm to compute the concept lattice.
An algorithm that is suitable is due to Lindig~\cite{Lindig.2000}, as it computes the covering relation together with the concept lattice.
Furthermore, an algorithm for transitive orientations~\cite{Golumbic.1977} is required to compute the conjugate order.
As we will discuss later, for both these algorithms the runtime is not critical, as we are interested in ordinal two-factorizations of small formal contexts.
Modern computers are easily able to deal with such contexts.
Still, if suitable supporting algorithms are chosen, this procedure results in a polynomial-time algorithm that computes ordinal two-factorizations.

\begin{algorithm}[t]
    \caption{Compute Ordinal Two-Factorization}
    \label{alg:factor2d}
    \vspace*{0.3em}
    \textbf{Input:}   Ordinal Two-Factorizable Formal Context $\K=(G,M,I)$     \\
    \textbf{Output:}  Ordinal Two-Factorization $F_1$, $F_2$
    \vspace*{0.5em}
    \hrule
    \begin{lstlisting}
def two_factor$(G,M,I)$:
    $(\B, \leq)$ $=$ $\BV(\compl{\K})$
    $(\B,E)$ $=$ co_comparability_graph$(\B, \leq)$
    $\leq_c$ $=$ transitive_orientation$(\B,E)$
    $\lessdot_1$ = ${\leq} \cup {\leq_c}$
    $\lessdot_2$ = ${\leq} \cup {\geq_c}$
    for $i$ in $\{1,2\}$:
        $L_i$ $=$ $\{\}$
        $\tilde{A}$ $=$ $\{\}$
        for $(A,B)$ in $\B$ ordered by $\lessdot_i$:
            $\tilde{A}$ $=$ $\tilde{A} \cup A$
            $L_i$ $=$ $L_i \cup (\tilde{A} \times B)$
        $F_i$ $=$ $(G \times M) \setminus L_i$
    return $F_1, F_2$
  \end{lstlisting}
\end{algorithm}

\section{Maximal Ordinal Two-Factorizations}

In this section, we propose an algorithm to compute an ordinal two-factorizations of a formal context that covers a large part of the incidence relation.

\begin{definition}[Maximal Ordinal Two-Factorizations]
    For a formal context $\K=(G,M,I)$ a maximal ordinal two-factorization is a pair of Ferrers relations $F_1, F_2 \subset I$ such that there are no Ferrers relations $\tilde{F}_1, \tilde{F}_2 \subset I$ with $|\tilde{F}_1 \cup \tilde{F}_2| \geq |F_1 \cup F_2|$.
\end{definition}

While there are various thinkable ways, how to define maximal ordinal two-factorizations, we have chosen to do so by minimizing the size of not-covered incidence, as each element in the incidence can be viewed as a data point that would thus be lost.
This definition also aligns with Ganter's suggestion in his textbook to maximize the size of the union \cite{Ganter.2013}.
Note, that a maximal two-factorization always exists, as a single element in the incidences is a Ferrers relation by itself.

\subsection{Maximal Ordinal Two-Factorizations are Hard}

First, we investigate the computational complexity of the \textsc{Maximal Ordinal Two-Factorization Problem}.
To do so, we perform a reduction from the \textsc{Two-Dimension Extension Problem}.
The problem requires finding the minimum number of pairs that need to be added to an order relation to make it two-dimensional.
Formally, given an ordered set $(X, {\leq})$ and a $k \in \N$ requests the decision whether there is a set ${\tilde{\leq}} \supset {\leq}$ such that is an order and $(X,{\tilde{\leq}})$ has order dimension two and $|{\tilde{\leq}}|-|{\leq}|=k$.
Felsner and Reuter~\cite{Felsner.1999} showed that deciding this problem is $\NP$-complete.

The decision problem, that is linked to the minimum number of pairs that we need to add to a relation to make it two-dimensional is the \textsc{Ordinal Two-Factorization Problem}.
For it, a formal context $(G,M,I)$ and a $k \in \N$ are given.
It is requested to decide the existance of a formal context $(G,M,\tilde{I})$ with $\tilde{I} \subset I$ and $|I|-|\tilde{I}| = k$ that has an ordinal two-factorization.

The relation between these two problems gives rise to the computational complexity of computing a two-factorization.

\begin{lemma}
    \label{two_red}
    There is a polynomial-time reduction from the \textup{\textsc{Two-Dimension Extension Problem}} to the \textup{\textsc{Ordinal Two-Factorization Problem}}.
\end{lemma}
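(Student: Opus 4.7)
The plan is to construct in polynomial time a formal context $\K$ and a parameter $k'$ from the given instance $((X, \leq), k)$ of the \textsc{Two-Dimension Extension Problem}, so that the YES-answer for that instance coincides with the YES-answer for the \textsc{Ordinal Two-Factorization Problem} on $(\K, k')$. I would take $\K = (X, X, I)$ with $I := (X \times X) \setminus {\leq}$ and $k' := k$. The complement context $\compl{\K} = (X, X, {\leq})$ is then the standard context of $(X, \leq)$, whose concept lattice is the Dedekind--MacNeille completion $\mathrm{DM}(X, \leq)$; since DM-completion preserves order dimension, $\dim \mathcal{B}(\compl{\K}) = \dim(X, \leq)$.

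The correspondence between the two problems passes through the bijection $\tilde{I} \longleftrightarrow R := (X \times X) \setminus \tilde{I}$: a subcontext of $\K$ with $|I| - |\tilde{I}| = k$ amounts to a reflexive relation $R$ extending $\leq$ by exactly $k$ new pairs, and by the characterization recalled in the foundations, this subcontext is ordinally two-factorizable iff $\dim \mathcal{B}(X, X, R) \leq 2$. For the forward direction, given an order extension $\tilde{\leq} \supseteq {\leq}$ with $|\tilde{\leq} \setminus {\leq}| = k$ and $\dim(X, \tilde{\leq}) \leq 2$, setting $\tilde{I} := (X \times X) \setminus \tilde{\leq}$ yields a subcontext with exactly $k$ pairs removed whose complement's concept lattice is $\mathrm{DM}(X, \tilde{\leq})$, of dimension at most two, and hence two-factorizable.

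The reverse direction is the main obstacle: starting from a two-factorizable subcontext $(X, X, \tilde{I})$ with $|I| - |\tilde{I}| = k$, I would extract an order extension of $(X, \leq)$ with at most $k$ added pairs and dimension at most two. Writing the factorization as $\tilde{I} = F_1 \cup F_2$ with $F_i$ Ferrers and passing to complements gives $R = \compl{F_1} \cap \compl{F_2}$ with each $\compl{F_i}$ still Ferrers; the plan is to refine each Ferrers complement to a linear (quasi-)order on $X$ that extends a fixed linear extension of $\leq$, using the standard correspondence between Ferrers relations and biorders, and take $\tilde{\leq}$ to be the intersection of the two resulting linear orders. By construction $\tilde{\leq}$ extends $\leq$, has dimension at most two, and sits inside $R$, so it uses at most $k$ new pairs; a standard padding step closes the gap between \emph{at most $k$} and \emph{exactly $k$} without raising the dimension. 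The delicate point is arranging both refinements to simultaneously extend $\leq$; should that prove fragile, an alternative is to blow up every element of $X$ into many identical copies, which forces any cost-$k$ removal to correspond to a bona fide order extension on $X$ and thereby rules out $R$ that fail to be transitive or antisymmetric.
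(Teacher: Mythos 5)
Your reduction is the same as the paper's: both map $((X,\leq),k)$ to the context $(X,X,{\nleq})$ with the same budget $k$, handle the forward direction by turning a two-dimensional extension $\tilde{\leq}$ into two Ferrers relations covering ${\nleq}\setminus C$ (you via the order-dimension characterization and the Dedekind--MacNeille completion, the paper via an explicit realizer of ${\leq}\cup C$ --- these are equivalent), and handle the converse by complementing the two Ferrers factors and refining the complements to linear extensions of $\leq$ whose intersection is the desired order. So the route is not genuinely different; the question is whether your version of the converse closes.

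It does not quite, and the problem sits exactly where you flag it. As written, ``refine each Ferrers complement to a linear (quasi-)order that extends a \emph{fixed linear extension} of $\leq$'' cannot be what you mean: two linear orders on $X$ that both contain the same linear order $L\supseteq{\leq}$ are both equal to $L$, so their intersection is a chain, not an extension of $\leq$ of cost at most $k$. What is actually needed is that each complement $(X\times X)\setminus F_i$ --- a reflexive, total Ferrers relation containing $\leq$ --- contains a linear order that still contains $\leq$; equivalently, that ${<}\cup F_i^{-1}$ is acyclic. This does follow from the Ferrers property of $F_i$ together with $F_i\cap{\leq}=\emptyset$, but it is the one nontrivial step of the whole converse and you leave it as a plan rather than an argument. (For what it is worth, the paper's own proof is thin at the same spot: it asserts that these complements are transitive and linearizes their equivalence classes, yet $X^2\setminus\{(a,c)\}$ is a reflexive Ferrers relation that is not transitive, so the acyclicity argument is needed there too.) The blow-up fallback and the padding from ``at most $k$'' to ``exactly $k$'' are likewise only sketched; the exactness issue is also present, and unaddressed, in the paper.
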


\begin{proof}
    Let $(X,\leq)$ and $k\in \N$ be an instance of the \textsc{Two-Dimension Extension Problem}. We claim, that the problem has a solution if and only if \textsc{Ordinal Two-Factorization Problem} $(X,X,{\nleq})$ with $k$ has a solution.

    Let $(X,{\leq})$ be an ordered set with a two-dimension-extension $C$ of size $k$.
    Let $L_1$, $L_2$ be a realizer, i.e., two linear extensions of ${\leq} \cup C$ with $L_1 \cap L_2 = {\leq} \cup C$.
    Then the relations $F_1 \coloneqq (X \times X)  \setminus L_1$ and $F_2 \coloneqq (X \times X)  \setminus L_2$ are Ferrers relations.
    Furthermore, $F_1 \subset {\nleq}$ and $F_2 \subset {\nleq}$ by definition.
    Assume that $F_1 \cup F_2 \cup C \neq {\nleq}$.
    Then there has to be a pair $a,b \in X$ with $a \nleq b$ and $(a,b)\notin C$.
    Then $(b,a) \in L_1$, or $(b,a) \in L_2$, or both, without loss of generality let $(b,a)\in L_1$.
    But this implies that $(a,b) \in L_1$ which is a contradiction.

    Now, let for the formal context $(X,X,\nleq)$ be $C$ a set of cardinality $k$ such that there are two Ferrers relations $F_1, F_2$ with $|F_1 \cup F_2 \cup C| = |{\nleq}|$.
    Now, let $L_1=(X \times X) \setminus F_1$ and $L_2=(X \times X) \setminus F_2$.
    Then it holds that $L_1 \cap L_2 = {\leq} \cup C$.
    $L_1$ and $L_2$ are supersets of $\leq$ and Ferrers relations, thus they are transitive, reflexive and for all elements $a,b \in X$ it holds that $a,b \in L_i$ or $b,a \in L_i$.
    By definition for both $i \in \{1,2\}$ there is a $\tilde{L}_i \subset L_i$ that has all these properties and is also antisymmetric.
    The existence of $\tilde{L}_i$ follows from placing a linear order on each of the equivalence classes of $L_i$.
    Both $\tilde{L}_1$ and $\tilde{L}_2$ are linear extensions of $\leq$ and $|\tilde{L}_1 \cap \tilde{L}_2| \leq |L_1 \cap L_2| = |{\leq}|+k$.

    Therefore, the claim is proven. and  we reduced the \textsc{Two-Dimension Extension Problem} to the \textsc{Ordinal Two-Factorization Problem}.

\end{proof}

\begin{lemma}
    \label{two_val}
    Validation of a solution of the \textup{\textsc{Ordinal Two-Factorization}} can be done in polynomial time.
\end{lemma}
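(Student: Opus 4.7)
The plan is to exhibit a short certificate for a YES-instance and describe an explicit polynomial-time verifier for it. A natural certificate for an instance $((G,M,I),k)$ of the \textsc{Ordinal Two-Factorization Problem} consists of the two Ferrers relations $F_1,F_2 \subseteq I$ witnessing that the sub-context $(G,M,F_1\cup F_2)$ is two-factorizable with $|I|-|F_1\cup F_2|=k$. The size of this certificate is bounded by $2|G||M|$, hence polynomial in the input size.

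Given such a certificate, I would carry out the following checks in order. First, verify set membership $F_1,F_2\subseteq I$ and compute $\tilde I := F_1\cup F_2$; this is possible in time $O(|G||M|)$. Second, check $|I|-|\tilde I|=k$, again in linear time. Third, verify that each $F_i$ is a Ferrers relation by testing the defining property directly: for every pair $(g,m),(h,n)\in F_i$, confirm that $(g,n)\in F_i$ or $(h,m)\in F_i$. Iterating over all such pairs costs $O(|F_i|^2)=O(|G|^2|M|^2)$ per relation, so the total verification time is $O(|G|^2|M|^2)$, which is polynomial in the size of the input $(G,M,I)$.

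Since each step is performed in polynomial time and the certificate itself has polynomial length, the overall validation procedure runs in polynomial time, which proves the claim. There is no real obstacle here: the Ferrers property is a local condition on pairs of incidence elements, so no combinatorial search is required during verification, in contrast to the hardness direction established in \cref{two_red}.
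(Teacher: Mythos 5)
Your proof is correct, but it verifies a different certificate than the paper does. The paper takes as witness only the removed set $C\subset I$ of size $k$ and then checks whether $(G,M,I\setminus C)$ admits an ordinal two-factorization at all, by appealing to the characterization (cited in the foundations section from Doignon et al.) that a context is two-factorizable if and only if its incompatibility graph is bipartite; bipartiteness is testable in polynomial time. You instead take the factorization $F_1,F_2$ itself as the witness and check the Ferrers property directly on all pairs, in $O(|G|^2|M|^2)$ time. Your route is more elementary and self-contained --- it needs no structural theorem about incompatibility graphs --- at the cost of a larger (but still polynomial) certificate; the paper's route keeps the certificate minimal and delegates the existence check to a known graph-theoretic criterion. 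Either argument establishes membership in $\NP$, which together with \cref{two_red} yields the completeness theorem, so there is no gap.
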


\begin{proof}
    Given a formal context $(G,M,I)$ and a set $C\subset I$ of size $k$, to check whether $(G,M,I\setminus C)$ admits an ordinal two-factorization is equivalent to the check whether the  incompatibility graph of $(G,M,I\setminus C)$ is bipartite.
\end{proof}

Thus, the \textsc{Ordinal Two-Factorization Problem} is in the same complexity class as the \textsc{Two-Dimension Extension Problem}.

\begin{theorem}
    The \textup{\textsc{Ordinal Two-Factorization Problem}} is $\NP$-complete.
\end{theorem}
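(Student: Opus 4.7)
The plan is to combine the two preceding lemmas to conclude $\NP$-completeness in the standard way. Recall that a decision problem is $\NP$-complete precisely when it lies in $\NP$ and some known $\NP$-hard problem reduces to it in polynomial time.

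First, I would argue membership in $\NP$. By \cref{two_val}, given a candidate certificate $C \subset I$ of size $k$, one can check in polynomial time whether $(G,M,I \setminus C)$ admits an ordinal two-factorization, by testing bipartiteness of the incompatibility graph, which is doable in linear time in the size of that graph. Hence the Ordinal Two-Factorization Problem lies in $\NP$.

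Second, I would establish $\NP$-hardness by invoking the result of Felsner and Reuter~\cite{Felsner.1999}, which states that the Two-Dimension Extension Problem is $\NP$-complete, hence in particular $\NP$-hard. Then \cref{two_red} supplies a polynomial-time reduction from that problem to the Ordinal Two-Factorization Problem, so the latter is $\NP$-hard as well.

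Combining the two facts yields that the Ordinal Two-Factorization Problem is $\NP$-complete. There is no genuine obstacle left; all substantive work has already been carried out in the proofs of \cref{two_red} and \cref{two_val}. The only thing to be careful about is the explicit citation of the Felsner--Reuter hardness result, so that the chain of implications is complete.
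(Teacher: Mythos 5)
Your proposal is correct and follows exactly the same route as the paper, which simply cites the two lemmas (the reduction from the Two-Dimension Extension Problem and the polynomial-time verifiability) together with the Felsner--Reuter hardness result. You merely spell out the standard argument that the paper leaves implicit.
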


\begin{proof}
    Follows from \cref{two_red,two_val}.
\end{proof}

\subsection{Maximal Bipartite Subgraphs are Not Sufficient}
\label{fac-suff}

\begin{figure}
    \centering
    \begin{cxt}
        \att{a}%
        \att{b}%
        \att{c}%
        \att{d}%
        \att{e}%
        \att{f}%
        \att{g}%
        \att{h}%
        \att{i}%
        \att{j}%
        \att{k}%
        \att{l}%
        \att{m}%
        \att{n}%
        \att{o}%
        \att{p}%
        \att{q}%
        \att{r}%
        \obj{.XXX.X.X.XXXXXXX..}{1}
        \obj{X.XXXXXXXXXXXXX.X.}{2}
        \obj{XX.XXXXXXXXXXXX..X}{3}
        \obj{XXX.XX..XX.X.X.XXX}{4}
        \obj{XXXX.X.X.XXXXXXXXX}{5}
        \obj{XXXXX.X..X.X.X.XXX}{6}
        \obj{XXXXXX.XXXXXXXXXXX}{7}
        \obj{XXXXXXX.XX.X.X.XXX}{8}
        \obj{XXXXXXXX.XXXXXXXXX}{9}
        \obj{XXXXXXXXX.XX..X.XX}{10}
        \obj{XXXXXXXXXX.X.X.XXX}{11}
        \obj{XXXXXXXXXXX.X...XX}{12}
        \obj{XXXXXXXXXXXX.XXXXX}{13}
        \obj{XXXXXXXXXXXXX.X.XX}{14}
        \obj{XXXXXXXXXXXXXX.XXX}{15}
        \obj{XXXXXXXXXXXXXXX.XX}{16}
        \obj{XXXXXXXXXXXXXXXX.X}{17}
        \obj{XXXXXXXXXXXXXXXXX.}{18}
    \end{cxt}

    \caption{Example of a context with a maximal bipartite subgraph that does not give rise to an ordinal two-factorization.}
    \label{cxt:factor_counterexample2}
\end{figure}

The structure of the incompatibility graph provides an interesting foundation to compute ordinal two-factorizations.
It seems to be a tempting idea to compute the maximal induced bipartite subgraph of a formal context.
The vertex set that induces such a maximal bipartite subgraph could than be used for the ordinal two-factorization.
However, it turns out a bipartite subgraph does not always give rise to an ordinal two-factorization as \cref{cxt:factor_counterexample2} demonstrates.
Its incompatibility graph has an odd cycle, and it is thus not bipartite.
This fact makes the formal context not two-factorizable.
An inclusion-minimal set that can be removed to make it bipartite is given by
\begin{align*}
    C = \{ & (6,j), (4,n), (7,p), (18,p), (6,p), (6,n), (12,k), (10,g), (6,g), (5,p), (2,i), \\
           & (4,p), (12,m), (3,i), (12,h), (1,p), (2,q)\}.
\end{align*}
However, the formal context $(G,M,I \setminus C)$ is again not two-factorizable as its    incompatibility graph contains once again an odd cycle.
This is possible as new incompatibilities can arise with the removal of incidences.

\subsection{Computing Maximal Ordinal Two-Factorizations}

In the last sections, we did structural investigations on ordinal two-factorizations and provided an algorithm to compute them.
We now use this algorithm to compute a large ordinal two-factorization.
To this end, we propose the algorithm \textsc{Ord2Factor} in \cref{alg:ord2factor}.
Thereby, the induced bipartite subgraph of the incompatibility graph is computed.
As new incompatibilities can arise by the removal of crosses, as noted previous section, we might have to repeat this procedure.

\begin{algorithm}[t]
    \caption{\textsc{Ord2Factor} to Compute Large Ordinal Two-Factorization}
    \label{alg:ord2factor}
    \vspace*{0.3em}
    \textbf{Input:} Formal Context $(G,M,I)$\\
    \textbf{Output:} Ordinal Factors $F_1$ and $F_2$
    \vspace*{0.5em}
    \hrule
    \begin{lstlisting}
def Ord2Factor$(G,M,I)$:
    $(I, E)$ $=$ incompatibility_graph$(G,M,I)$
    while $(I,E)$ not bipartite:
        $I$ $=$ maximal_bipartite_inducing_vertex_set$(I,E)$
        $(I, E)$ $=$ incompatibility_graph$(G,M,I)$
    return two_factor$(G,M,I)$
\end{lstlisting}
\end{algorithm}

The induced bipartite subgraph can be computed using the methods proposed by Dürrschnabel et al.~\cite{bipartite}.
We are not aware of a formal context where the SAT-sovler approach described in this paper requires a second repetition of the algorithm which motivates the following open question.

\smallskip

\textsc{Open Question 1.} Is there a formal context $(G,M,I)$, such that the maximal set $\tilde{I}$ that induces a bipartite subgraph on the incompatibility graph does not give rise to a two-factorizable formal context?

\smallskip

This open problem is of special interest, because it would allow our approach to compute globally maximal two-factorization, as the following shows.

\begin{theorem}
    Let $\K=(G,M,I)$ be a formal context and $\tilde{I}$ the subset of $I$ that induces a maximal bipartite subgraph on the incompatibility graph.
    If the formal context $\K=(G,M,\tilde{I})$ admits an ordinal two-factorization, its factors are the maximal ordinal factors of $\K$.
\end{theorem}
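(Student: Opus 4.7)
The plan is to show that the maximum size of the union $F_1 \cup F_2$ over all ordinal two-factorizations of $\K$ equals $|\tilde{I}|$, so that whenever $(G,M,\tilde{I})$ is itself two-factorizable, its factors realize this optimum.

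First, I would establish the key lemma: for any pair of Ferrers relations $F_1, F_2 \subseteq I$, the set $J \coloneqq F_1 \cup F_2$ induces a bipartite subgraph of the incompatibility graph of $\K$. The reason is that two elements of a single Ferrers relation cannot be incompatible in $\K$: if $(g,m), (h,n) \in F_i$, the Ferrers property of $F_i$ forces $(g,n) \in F_i \subseteq I$ or $(h,m) \in F_i \subseteq I$, which contradicts the definition of incompatibility. Hence every edge of the induced incompatibility graph on $J$ has its endpoints in different factors.

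Next I would address the potential ambiguity coming from $F_1 \cap F_2$. If $(g,m) \in F_1 \cap F_2$ had an incompatible neighbor $(h,n) \in J$, then $(h,n)$ would lie in $F_1$ or in $F_2$, and in either case the previous paragraph would contradict the incompatibility. So every element of $F_1 \cap F_2$ is isolated in the subgraph induced on $J$, and placing these isolated vertices arbitrarily together with the partition $F_1 \setminus F_2$, $F_2 \setminus F_1$ yields a valid bipartition of the induced subgraph. Thus $J$ is a subset of $I$ that induces a bipartite subgraph.

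To finish, suppose $F_1', F_2'$ are the factors produced for $(G,M,\tilde{I})$, so that $F_1' \cup F_2' = \tilde{I}$. Given any ordinal two-factorization $F_1, F_2$ of $\K$, the previous step yields $|F_1 \cup F_2| \leq |\tilde{I}| = |F_1' \cup F_2'|$ by maximality of $\tilde{I}$ among bipartite-inducing subsets. Therefore $F_1', F_2'$ is a maximal ordinal two-factorization of $\K$ in the sense of the definition given above.

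The conceptual hurdle is entirely in the bipartite-induction lemma, and there the only non-routine point is ruling out edges incident to elements of $F_1 \cap F_2$; once that is handled the theorem reduces to comparing cardinalities against $\tilde{I}$.
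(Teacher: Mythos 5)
Your proof is correct, and it reaches the same contradiction as the paper --- comparing $|F_1\cup F_2|$ against $|\tilde I|$ via the maximality of $\tilde I$ among bipartite-inducing subsets --- but it gets the key intermediate fact by a different and more self-contained route. The paper argues in one line: the subcontext $(G,M,F_1\cup F_2)$ is two-factorizable, hence (by the cited characterization of Doignon et al.) its incompatibility graph is bipartite, hence the subgraph induced by $F_1\cup F_2$ on the incompatibility graph of $\K$ is bipartite. That last "hence" silently uses the fact that deleting incidences can only \emph{add} incompatibilities, so the induced subgraph is a subgraph of the subcontext's incompatibility graph --- a point the paper itself flags elsewhere (Sec.~\ref{fac-suff}) as the reason maximal bipartite subgraphs are not sufficient, but does not spell out here. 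You instead prove directly from the Ferrers property that no two elements of a single factor can be incompatible in $\K$, and you explicitly handle the vertices in $F_1\cap F_2$ by showing they are isolated in the induced subgraph (which matches the paper's earlier lemma characterizing the intersection). What your route buys is independence from the external citation and from the subtlety about which incompatibility graph is meant; what the paper's route buys is brevity, since it reuses machinery already introduced. Both are valid proofs of the same statement.
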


\begin{proof}
    Assume there are two ordinal factors $F_1$ and $F_2$ of $\K$ and $|F_1 \cup F_2| > |\tilde{I}|$.
    But then the context $(G,M,F_1 \cup F_2)$ is a two-factorizable and thus the graph induced by $F_1 \cup F_2$ on the incompatibility graph is bipartite, a contradiction.
\end{proof}

\section{Runtime Discussion}
If a formal context has order dimension two, it cannot contain a contranominal of dimension three as an induced subcontext.
From a result by Albano~\cite{Albano.2015}, it follows that a context without a contranominal scale of dimension three and thus especially for all two-dimensional formal contexts, the number of concepts is bounded from above by $\frac{3}{2} |G|^2$ or dually $\frac{3}{2} |M|^2$.
There are algorithms that compute the set of all concept of a formal context with polynomial delay~\cite{Kuznetsov.2002} and the computation of a conjugate order can be performed in quadratic time~\cite{McConnell.1997}.
Thus, the algorithm to compute ordinal two-factorizations has polynomial runtime if it paired with the these algorithms.

For the computation of large ordinal factorizations of formal contexts that do not omit ordinal two-factorizations by their structure, the runtime-obstacle is the computation of the large induced bipartite subgraph.
If the exact problem is solved, the algorithm thus has exponential runtime.
Our previous work~\cite{bipartite} also  discusses three heuristics for the computation of bipartite subgraphs which can be plugged to achieve an algorithm that has polynomial runtime.
Usually, we are interested in two-factorizations of formal context with limited size, as a human can otherwise not grasp the connections encoded in the dataset.
Thus, the runtime of these algorithms are usually not the critical limitation and thus a method that is computationally expensive can be employed.

\section{Conclusion}
In this paper, we expanded on the work done on ordinal two-factorizations in the realm of ordinal factor analysis.
First, we performed some structural investigations about the disjointness of the two ordinal factors.
Thereby, we were able to characterize the incidence pairs that can appear in both ordinal factors as the isolated elements of the incompatibility graph.
Then, we proposed an algorithm for the computation of maximal ordinal two-factorizations.
To this end, we developed a polynomial time algorithm to compute a two-factorization of a formal context that has a bipartite incompatibility graph.
We showed, that the problem to compute maximal ordinal two-factorizations is $\NP$-complete and proposed our approach \textsc{Ord2Factor} to compute large ordinal two-factorizations.
As we demonstrated that the problem entails an $\NP$-complete problem, the resulting algorithm is exponential.

Datasets often consist not only of binary but also already ordinal data.
The ordinal factor analysis in its current form can only deal with this data by interpreting it as binary.
While scaling in formal concept analysis is a tool to deal with this data, factors will not necessarily respect the order encapsulated in the data.
In our opinion, the next step should be to extend this method to deal with this kind of non-binary data directly.

\bibliographystyle{template/splncs04}
\bibliography{paper}
\end{document}